\newtheorem{proposition}{Proposition}
\newtheorem{assumption}{Assumption}
\title{Adaptive Batch Size and Learning Rate Scheduler for Stochastic Gradient Descent Based on Minimization of Stochastic First-order Oracle Complexity}
\author{
 Hikaru Umeda,
 Hideaki Iiduka
}
\begin{document}

\maketitle

\begin{abstract}
The convergence behavior of mini-batch stochastic gradient descent (SGD) is highly sensitive to the batch size and learning rate settings. Recent theoretical studies have identified the existence of a critical batch size that minimizes stochastic first-order oracle (SFO) complexity, defined as the expected number of gradient evaluations required to reach a stationary point of the empirical loss function in a deep neural network. An adaptive scheduling strategy is introduced to accelerate SGD that leverages theoretical findings on the critical batch size. The batch size and learning rate are adjusted on the basis of the observed decay in the full gradient norm during training. Experiments using an adaptive joint scheduler based on this strategy demonstrated improved convergence speed compared with that of existing schedulers.  
\end{abstract}

\begin{links}
    \link{Code}{https://anonymous.4open.science/r/adaptive-scheduler}
\end{links}

\section{Introduction}
The rapid increase in the computational cost of training deep neural networks (DNNs) has made efficient optimization strategies more important than ever. Mini-batch stochastic gradient descent (SGD) \citep{robb1951,zinkevich2003,nem2009,gha2012,gha2013} and its variants are widely used due to their simplicity and scalability. However, the convergence behavior of these methods is highly sensitive to hyperparameters such as batch size (BS) and learning rate (LR), especially in the nonconvex optimization landscapes characteristic of deep learning.

Among these hyperparameters, BS plays a particularly important role. Increasing the BS~\citep{Byrd:2012aa,balles2016coupling,pmlr-v54-de17a,l.2018dont,goyal2018accuratelargeminibatchsgd,shallue2019,zhang2019} has been shown to reduce the gradient variance and accelerate training. 

Recently reported results \citep{umeda2025increasing} indicate that effective LRs for SGD are either constant or increasing as BS is increased because increasing both BS and LR speeds SGD convergence. Hence, in this work, we focused on \emph{using an increasing BS and an increasing or constant LR} (as represented in \eqref{bs} and \eqref{lr}). 

Recent theoretical studies have highlighted the importance of \emph{stochastic first-order oracle (SFO) complexity} \citep{doi:10.1137/120880811, Ghadimi:2016aa}, defined as the expected number of gradient evaluations required to reach a stationary point of the empirical loss function in a DNN. A key insight from these studies is the existence of a \emph{critical BS} \citep{shallue2019,zhang2019,pmlr-v202-sato23b,Imaizumi19062024, tsukada2025relationship, sato2025analysismuonsconvergencecritical} that minimizes SFO complexity; increasing the BS beyond this point can actually degrade overall training efficiency due to increased per-iteration cost. Optimizers that operate at the critical BS converge more rapidly since they minimize SFO complexity.

We have developed a novel scheduler for mini-batch SGD that \textbf{adjusts the BS and LR} on the basis of the \emph{critical BS} at each training stage. The full gradient norm—defined as the norm of the empirical loss gradient—is used as a signal to  adjust the training schedule—with the aim of reducing SFO complexity while ensuring stable convergence.

\subsection{Contributions}
The contributions of this work are as follows:
\begin{itemize}
\item \textbf{Theoretical Foundation}: We provide a theoretical foundation for adaptive scheduling by showing that the critical BS required to minimize SFO complexity scales as $O(1/\epsilon^2)$, where $\epsilon$ denotes the threshold for the target full gradient norm (see Propositions \ref{prop:1} and \ref{prop:2}).
\item \textbf{Adaptive Scheduling Strategy}: We present a scheduling strategy that adaptively adjusts both BS and LR on the basis of the current full gradient norm (see \eqref{eq:linear-scheduler} and \eqref{eq:exp-scheduler}), and we demonstrate that this strategy accelerates SGD while guaranteeing convergence (Proposition \ref{prop:3}).
\item \textbf{Algorithm Design}: We present a practical adaptive algorithm that transitions between training stages when the full gradient norm falls below a predefined threshold and updates the hyperparameters accordingly (Algorithm \ref{algo:2}).
\item \textbf{Empirical Validation}: We demonstrate on the CIFAR-10 and CIFAR-100 datasets \citep{Krizhevsky2009} that our method accelerates convergence compared with baseline schedulers with fixed or periodic update rules.
\item \textbf{Comparison with Existing Methods}: We compare our approach with three commonly used scheduling strategies and show that it achieves superior performance (see Evaluation Section).
\end{itemize}

\section{Theoretical Background}
\subsection{Empirical Risk Minimization}
Let $\bm{\theta} \in \mathbb{R}^d$ denote the parameter of a DNN, where $\mathbb{R}^d$ is a $d$-dimensional Euclidean space with inner product $\langle \cdot, \cdot \rangle$ and norm $\|\cdot\|$. Let $S = \{(\bm{x}_1,\bm{y}_1), \cdots, (\bm{x}_n,\bm{y}_n)\}$ denote the training set, where $n \in \mathbb{N}$ is the number of 
samples, and each data point $\bm{x}_i$ is paired with label $\bm{y}_i$. Let $f_i (\cdot) := f(\cdot;(\bm{x}_i,\bm{y}_i)) \colon \mathbb{R}^d \to \mathbb{R}_+$ denote the loss function corresponding to the $i$-th labeled training data $(\bm{x}_i,\bm{y}_i)$. 
Our objective is to solve the empirical risk minimization problem by minimizing the empirical loss, 
defined for all $\bm{\theta} \in \mathbb{R}^d$ as
\begin{align}\label{er}
    f (\bm{\theta}) 
    \coloneqq \frac{1}{n} \sum_{i=1}^n f(\bm{\theta};(\bm{x}_i,\bm{y}_i))
    = \frac{1}{n} \sum_{i=1}^n f_i(\bm{\theta}).
\end{align}
We assume that the loss function $f_i$ ($i\in [n] \coloneqq \{1,\cdots, n\}$) 
satisfies the conditions stated in the following standard assumption.

\begin{assumption}\label{assum:1}
Let $L > 0$ and $\sigma \geq 0$.
\begin{description}
\item[(A1)] Each loss function $f_i \colon \mathbb{R}^d \to \mathbb{R}$ is continuously differentiable. Moreover, the empirical loss $f$ defined in \eqref{er} is $L$-smooth; that is, for all $\bm{\theta}_1, \bm{\theta}_2 \in \mathbb{R}^d$,
    $\|\nabla f(\bm{\theta}_1) - \nabla f(\bm{\theta}_2)\| \leq L \|\bm{\theta}_1 - \bm{\theta}_2\|$.
In addition, we assume that the minimal value of $f$ is finite; i.e., \( f^\star \coloneqq \min_{\bm{\theta} \in \mathbb{R}^d} f(\bm{\theta}) \in \mathbb{R} \).

\item[(A2)] Let $f_{\xi} \colon \mathbb{R}^d \to \mathbb{R}$ denote a loss function randomly selected from the set $\{f_1, \cdots, f_n\}$, where \(\xi\) is a random variable independent of \(\bm{\theta} \in \mathbb{R}^d\). The stochastic gradient of $\nabla f$, $\nabla f_\xi$, satisfies the following conditions:
  \begin{align*}
    &\textnormal{(i)} \quad \mathbb{E}_{\xi}[\nabla f_{\xi}(\bm{\theta})] = \nabla f(\bm{\theta}), \\
    &\textnormal{(ii)} \quad
    \mathbb{V}_\xi [\nabla f_\xi (\bm{\theta})] = \mathbb{E}_{\xi}\left[\| \nabla f_{\xi}(\bm{\theta}) - \nabla f(\bm{\theta}) \|^2\right] \leq \sigma^2,
  \end{align*}
where $\mathbb{E}_\xi [X]$ (resp. $\mathbb{V}_\xi [X]$) denotes the expectation (resp. variance) of $X$ with respect to $\xi$. 
\end{description}
\end{assumption}
When the random variable $\xi$ follows a discrete uniform distribution $\mathrm{DU}(n)$—as is standard in stochastic training of DNNs, it is obvious that condition (A2)(i) holds. That is, the stochastic gradient $\nabla f_\xi$ is an unbiased estimator of the full gradient $\nabla f$. Furthermore, suppose that each component function $f_i$ is $L_i$-smooth over a compact set $C$ (e.g., a closed ball centered at the origin $\bm{0}$ with sufficiently large radius $R$). Then the $L$-smoothness of $f$ in (A1) with $L = \frac{1}{n} \sum_{i\in [n]} L_i$, and (A2)(ii) with $\sigma^2 = \frac{2}{n} \sum_{i\in [n]} L_i (f^{\star \star} - f^\star)$ holds, where $f^{\star \star} \coloneqq \max_{\bm{\theta} \in C} f(\bm{\theta})$ (see, e.g., \citep[Appendix A.1]{umeda2025increasing} for a detailed derivation).

\subsection{Mini-batch SGD}
Given the $t$-th approximated point $\bm{\theta}_t \in \mathbb{R}^d$, mini-batch SGD uses $b_t$ loss functions $f_{\xi_{t,1}},\cdots,f_{\xi_{t,b_t}}$ randomly chosen from $\{f_1,\cdots,f_n\}$, where $\bm{\xi}_t \coloneqq (\xi_{t,1}, \cdots, \xi_{t,b_t})^\top$ consists of $b_t$ independent and identically distributed variables and $\bm{\xi}_t$ is independent of $\bm{\theta}_t$. The mini-batch gradient is defined by

\begin{align}\label{mini_batch_grad}
    \nabla f_{B_t} (\bm{\theta}_t) \coloneqq \frac{1}{b_t} \sum_{i=1}^{b_t} \nabla f_{\xi_{t,i}}(\bm{\theta}_t),
\end{align} 
where sample size $b_t \in \mathbb{N}$ is the BS. Mini-batch SGD updates the $(t+1)$-th approximated point as $\bm{\theta}_{t+1} \coloneqq \bm{\theta}_t - \eta_t \nabla f_{B_t} (\bm{\theta}_t)$, where $\eta_t > 0$ is the LR. The pseudo-code of mini-batch SGD is shown as Algorithm \ref{algo:1}.

\begin{algorithm}
\caption{Mini-batch SGD}
\label{algo:1}
\begin{algorithmic}[1]
\REQUIRE
$\bm{\theta}_0 \in \mathbb{R}^d$ (initial point), 
$b_t > 0$ (batch size), 
$\eta_t > 0$ (learning rate), 
$T \geq 1$ (steps).
\ENSURE 
$(\bm{\theta}_t) \subset \mathbb{R}^d$
\FOR{$t=0,1,\ldots,T-1$}
\STATE{ 
$\nabla f_{B_t}(\bm{\theta}_t)
:=
\frac{1}{b_t} \sum_{i=1}^{b_t} \nabla f_{\xi_{t,i}}(\bm{\theta}_t)$}
\STATE{
$\bm{\theta}_{t+1} 
:= \bm{\theta}_t - \eta_t \nabla f_{B_t}(\bm{\theta}_t)$}
\ENDFOR
\end{algorithmic}
\end{algorithm}

Assumption (A2)(i) implies that mini-batch gradient $\nabla f_{B_t}(\bm{\theta}_t)$, defined in \eqref{mini_batch_grad}, is an unbiased estimator of the full gradient $\nabla f (\bm{\theta}_t)$, and Assumption (A2)(ii) implies that the variance of the mini-batch gradient $\nabla f_{B_t}(\bm{\theta}_t)$, defined in \eqref{mini_batch_grad}, is bounded above. That is, the mini-batch gradient $\nabla f_{B_t}(\bm{\theta}_t)$ satisfies the following conditions:
\begin{align}\label{conditions_mini_batch}
    \mathbb{E}_{\bm{\xi}_t} \left[ \nabla f_{B_t} (\bm{\theta}_t)  \right] = \nabla f (\bm{\theta}_t) \text{ and } 
    \mathbb{V}_{\bm{\xi}_t} \left[ \nabla f_{B_t} (\bm{\theta}_t) \right] \leq \frac{\sigma^2}{b_t},
\end{align}
where these conditions hold under the assumption that $\bm{\xi}_t$ is independent of the history $[\bm{\xi}_{t-1}] \coloneqq \{ \bm{\xi}_0, \cdots, \bm{\xi}_{t-1} \}$. Using the condition $\mathbb{E}_{\bm{\xi}_t} \left[ \nabla f_{B_t} (\bm{\theta}_t) \right] = \nabla f (\bm{\theta}_t)$, the search direction $\bm{d}_t \coloneqq - \nabla f_{B_t}(\bm{\theta}_t)$ in mini-batch SGD satisfies 
\begin{align*}
    \mathbb{E} \left[ \left\langle \nabla f (\bm{\theta}_t), \bm{d}_t \right\rangle \right] = - \mathbb{E} \left[ \| \nabla f(\bm{\theta}_t) \|^2 \right] < 0,
\end{align*}
where $\mathbb{E}$ denotes the total expectation defined by $\mathbb{E} \coloneqq \mathbb{E}_{\bm{\xi}_0} \cdots \mathbb{E}_{\bm{\xi}_t}$, and we assume $\nabla f (\bm{\theta}_t) \neq \bm{0}$. That is, the search direction $\bm{d}_t \coloneqq - \nabla f_{B_t}(\bm{\theta}_t)$ is a descent direction of $f$, as defined in \eqref{er}, in the sense of the total expectation. It is expected that mini-batch SGD (Algorithm \ref{algo:1}), using the descent direction $\bm{d}_t \coloneqq - \nabla f_{B_t}(\bm{\theta}_t)$, finds a local minimizer of the empirical loss $f$ defined in \eqref{er}. Therefore, we focus on finding a stationary point $\bm{\theta}^\star \in \mathbb{R}^d$ of $f$ such that $\nabla f(\bm{\theta}^\star) = \bm{0}$.

\subsection{Upper Bound of Full Gradient Norm Generated by Mini-batch SGD}
Let $\eta_t$ $(\in [\eta_{\min}, \eta_{\max}] \subset [0, \frac{2}{L}))$ satisfy the condition $\sum_{t=0}^{T-1} \eta_t \neq 0$. Under Assumption \ref{assum:1}, the total expectation of the full gradient norm $\mathbb{E}[\|\nabla f (\bm{\theta}_t)\|]$ generated by mini-batch SGD satisfies the following bound from \citep[Lemma 2.1]{umeda2025increasing}: for all $T \in \mathbb{N}$, 
\begin{align}\label{ineq_1}
\min_{t\in [0:T-1]} \mathbb{E} \left[\|\nabla f(\bm{\theta}_t)\| \right]
\leq
\sqrt{B_T + V_T},
\end{align}
where $[0:T-1] \coloneqq \{0,1,\cdots, T-1\}$, and the bias term $B_T$ and the variance term $V_T$ are defined as follows:
\begin{align}
    &B_T \coloneqq \frac{2(f(\bm{\theta}_0) - f^\star)}{2 - L \eta_{\max}}
\frac{1}{\sum_{t=0}^{T-1} \eta_t},\label{B}\\   
    &V_T \coloneqq \frac{L \sigma^2}{2 - L \eta_{\max}} 
\frac{1}{\sum_{t=0}^{T-1} \eta_t} \sum_{t=0}^{T-1} \frac{\eta_t^2}{b_t}.\label{V}
\end{align}
Inequality \eqref{ineq_1} follows from the conditions in \eqref{conditions_mini_batch} and the descent lemma, which holds under the $L$-smoothness of $f$ in (A1). This inequality implies that, if both the bias term $B_T$ and the variance term $V_T$ converge to $0$ as $T \to + \infty$, then mini-batch SGD converges to a stationary point of $f$. The convergence behavior of $B_T$ and $V_T$, as defined in \eqref{B} and \eqref{V}, depends critically on the BS $b_t$ and LR $\eta_t$ settings.

\subsubsection{Batch size}
We consider BS defined as
\begin{align}\label{bs}
    b_m =
    \begin{dcases}
        b_0 + m \Delta b &\text{(Linearly increasing BS)}\\
        b_0 \delta^m &\text{(Exponentially increasing BS)},
    \end{dcases}
\end{align}
where $b_0$ is the initial BS, $m \in [0:M]$ denotes a stage  during which BS is kept constant, $\Delta b > 0$ is the increment in BS per stage, and $\delta > 1$ is the scaling factor for BS per stage. Let $T_m$ be the number of steps during stage $m$. Then, the total number of steps is $T = \sum_{m=0}^M T_m$. For example, under exponentially increasing conditions, BS is multiplied by $\delta$ per stage, and BS in stage $m$ is kept at $b_t = b_0 \delta^m$ $(t \in [T_m])$. 

The simplest BS is constant, $b_t =b_m = b$. The convergence of $V_T$ to $0$ depends on the setting of $\eta_t$ satisfying $\frac{\sum_{t=0}^{T-1} \eta_t^2}{\sum_{t=0}^{T-1} \eta_t} \to 0$ $(T \to + \infty)$. For example, a decaying LR $\eta_t = \frac{\eta_{\max}}{\sqrt{t+1}}$ satisfies $\frac{\sum_{t=0}^{T-1} \eta_t^2}{\sum_{t=0}^{T-1} \eta_t} \leq O( \frac{\log T}{\sqrt{T}}) \to 0$ $(T \to + \infty)$. However, the convergence rate $O(\frac{\log T}{\sqrt{T}})$ is slow. Meanwhile, increasing BS either linearly or exponentially yields a faster convergence rate than $O(\frac{\log T}{\sqrt{T}})$:
\begin{align}\label{V_T_inc_bs}
    V_T 
    &\leq \frac{L \sigma^2}{2 - L \eta_{\max}} \frac{1}{\sum_{m=0}^{M} \sum_{t=1}^{T_m} \eta_{\min}} \sum_{m=0}^{M} \sum_{t=1}^{T_m} \frac{\eta_{\max}^2}{b_t}\\
    &= \frac{L \sigma^2}{2 - L \eta_{\max}} \frac{\eta_{\max}^2}{\eta_{\min} T} \underbrace{\sum_{m=0}^M \sum_{t=1}^{T_m} \frac{1}{b_t}}_{\leq B < + \infty \text{ } (M \to + \infty)}
    = O \left(\frac{1}{T} \right). \nonumber
\end{align}
Hence, we focus on BS defined by \eqref{bs} as it ensures fast convergence of mini-batch SGD.

\subsubsection{Learning rate}
We consider LR defined as
\begin{align}\label{lr}
    \eta_m = 
    \begin{dcases}
        \eta &\text{ (Constant LR)}\\
        \eta_0 \gamma^m &\text{ (Exponentially increasing LR)},
    \end{dcases}
\end{align}
where $\eta \in (0, \frac{2}{L})$, $m \in [0:M]$ is a stage index such that BS and LR are kept constant (see \eqref{bs}), $\eta_0$ is the initial LR, and $\gamma > 1$ satisfies $\gamma^2 < \delta$ ($\delta > 1$ is used in exponentially increasing BS). When LR is constant, $\eta_t = \eta_m = \eta$, we have 
\begin{align}\label{B_T_constant}
    B_T 
    = \frac{2(f(\bm{\theta}_0) - f^\star)}{2 - L \eta}\frac{1}{\eta T}
    = O \left(\frac{1}{T} \right).
\end{align}
Since BS defined by \eqref{bs} and a constant LR satisfy \eqref{V_T_inc_bs} with $\eta = \eta_{\max} = \eta_{\min}$, we also have $V_T = O (\frac{1}{T})$. When LR is increased exponentially, we have 
\begin{align}\label{B_T_exp}
    B_T 
    = \frac{2(f(\bm{\theta}_0) - f^\star)}{2 - L \eta_{\max}}\frac{1}{\sum_{m=0}^M \sum_{t=1}^{T_m} \eta_t}
    = O \left(\frac{1}{\gamma^T} \right).
\end{align}
Moreover, when BS is increased exponentially, as defined by \eqref{bs}, we have 
\begin{align}\label{V_T_exp}
\begin{split}
    V_T 
    &= 
    \frac{L \sigma^2}{2 - L \eta_{\max}} \frac{1}{\sum_{m=0}^M \sum_{t=1}^{T_m} \eta_t} \sum_{m=0}^M \sum_{t=1}^{T_m} \frac{\eta_t^2}{b_t}\\
    &\leq 
    O \Bigg( 
    \frac{1}{\gamma^T} \underbrace{\sum_{m=0}^M \left(\frac{\gamma^2}{\delta} \right)^m}_{D < + \infty \text{ } (M \to + \infty)} 
    \Bigg)
    = O \left( \frac{1}{\gamma^T} \right),
\end{split}
\end{align}
where the second inequality follows from $\frac{\gamma^2}{\delta} < 1$. From \eqref{B_T_exp} and \eqref{V_T_exp}, we need to set $\delta$ in \eqref{bs} and $\gamma$ in \eqref{lr} such that $\gamma < \sqrt{\delta}$ to guarantee fast convergence $O(\frac{1}{\gamma^T})$ of both $B_T$ and $V_T$ in mini-batch SGD.

\subsection{Convergence Rate of Mini-batch SGD}
The above discussion leads to the following proposition.

\begin{proposition}\label{prop:1}
Let $(\bm{\theta}_t)_{t=0}^{T}$ be the sequence generated by mini-batch SGD (Algorithm \ref{algo:1}) with $\eta_t$ $(\in (0, \frac{2}{L}))$ satisfying $\sum_{t=0}^{T-1} \eta_t \neq 0$ under Assumption \ref{assum:1}. Then, the following hold.

{\em (i)} Constant BS $b$ and Constant LR $\eta$:
\begin{align*}
    \min_{t\in [0:T-1]} \mathbb{E} \left[\|\nabla f(\bm{\theta}_t)\| \right]
    \leq\sqrt{\underbrace{\frac{2(f(\bm{\theta}_0) - f^\star)}{\eta (2 - L \eta)}}_{C_1} \frac{1}{T}
    + 
    \underbrace{\frac{L \eta \sigma^2}{2 - L \eta}}_{C_2} \frac{1}{b}}.
\end{align*}

{\em (ii)} Linearly increasing BS $b_m$ and Constant LR $\eta$:
\begin{align*}
    \min_{t\in [0:T-1]} \mathbb{E} \left[\|\nabla f(\bm{\theta}_t)\| \right]
    \leq 
    \sqrt{\frac{C_1}{T}
    + 
    \frac{B C_2}{T}}
    = 
    O \left( \frac{1}{\sqrt{T}} \right).
\end{align*}  

{\em (iii)} Exponentially increasing BS $b_m$ and 
LR $\eta_m$:
\begin{align*}
    \min_{t\in [0:T-1]} \mathbb{E} \left[\|\nabla f(\bm{\theta}_t)\| \right]
    = 
    O \left( \sqrt{\frac{C_1}{\gamma^T} + \frac{D C_2}{\gamma^T}} \right) = O \left(\frac{1}{\sqrt{\gamma^T}} \right).
\end{align*}
\end{proposition}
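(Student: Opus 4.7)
The plan is to start from inequality \eqref{ineq_1}, namely $\min_{t\in [0:T-1]} \mathbb{E}[\|\nabla f(\bm{\theta}_t)\|] \leq \sqrt{B_T + V_T}$, and simply specialize the bias term $B_T$ in \eqref{B} and the variance term $V_T$ in \eqref{V} to each of the three scheduling regimes. Everything needed has already been derived in the surrounding discussion, so the proposition is a matter of assembly rather than new analysis.

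For part (i), I would substitute $\eta_t \equiv \eta$ and $b_t \equiv b$, so that $\eta_{\min}=\eta_{\max}=\eta$ and $\sum_{t=0}^{T-1} \eta_t = \eta T$. Then $B_T = \frac{2(f(\bm{\theta}_0) - f^\star)}{(2 - L\eta)\eta T} = C_1/T$ directly, and $V_T = \frac{L\sigma^2}{(2-L\eta)\eta T}\cdot\frac{\eta^2 T}{b} = C_2/b$. Plugging these into \eqref{ineq_1} gives the stated square-root bound verbatim.

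For part (ii), with $\eta_t \equiv \eta$ constant, the computation of $B_T$ is identical to part (i), giving $B_T = C_1/T$. For $V_T$, I would apply the estimate already carried out in \eqref{V_T_inc_bs}: using $\eta_{\min}=\eta_{\max}=\eta$, the factor $\eta_{\max}^2/\eta_{\min} T$ simplifies to $\eta/T$, and the double sum $\sum_{m=0}^M\sum_{t=1}^{T_m}(1/b_t)$ is bounded by $B < \infty$ as $M \to \infty$ (as asserted in \eqref{V_T_inc_bs} for the linearly increasing schedule), yielding $V_T \leq BC_2/T$. The stated rate $O(1/\sqrt{T})$ then follows from \eqref{ineq_1}. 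For part (iii), I would invoke \eqref{B_T_exp} to obtain $B_T = O(1/\gamma^T)$ and \eqref{V_T_exp} to obtain $V_T = O(D/\gamma^T)$, where the finiteness of $D$ comes from the paper's standing requirement $\gamma^2 < \delta$ that makes $\sum_{m=0}^M (\gamma^2/\delta)^m$ a convergent geometric series. Substituting into \eqref{ineq_1} and taking the square root gives $O(1/\sqrt{\gamma^T})$.

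There is essentially no hard step: the descent-lemma analysis that produces \eqref{ineq_1} has already been cited from \citep{umeda2025increasing}, and the regime-specific asymptotics for $B_T$ and $V_T$ have all been worked out in \eqref{V_T_inc_bs}--\eqref{V_T_exp}. The only point that warrants care is constant tracking—ensuring that the $C_1$ and $C_2$ one reads off from \eqref{B} and \eqref{V} match the grouping written in the proposition statement, and that the conditions $\eta < 2/L$ and $\gamma^2 < \delta$ are respected so that all denominators are positive and the geometric bound $D$ is finite. Once that bookkeeping is in place, each of (i)--(iii) is a one-line substitution into \eqref{ineq_1}.
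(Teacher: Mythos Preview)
Your proposal is correct and follows essentially the same approach as the paper: specialize \eqref{ineq_1} via the already-derived estimates \eqref{B_T_constant} and the direct computation $V_T=C_2/b$ for (i), \eqref{B_T_constant} and \eqref{V_T_inc_bs} for (ii), and \eqref{B_T_exp}--\eqref{V_T_exp} for (iii). The paper's own proof is exactly this assembly, stated in one sentence per case.
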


\begin{proof}
Property (i) follows from \eqref{ineq_1}, \eqref{B_T_constant}, and $V_T = \frac{L \sigma^2}{2 - L \eta} \frac{1}{\eta T} \frac{\eta^2 T}{b} = \frac{C_2}{b}$, Property (ii) follows from \eqref{ineq_1}, \eqref{V_T_inc_bs}, and \eqref{B_T_constant}, and Property (iii) follows from \eqref{ineq_1}, \eqref{B_T_exp}, and \eqref{V_T_exp}. 
\end{proof}

Let us compare the properties in Proposition \ref{prop:1}. For example, let $\delta = 2$ (i.e., BS is doubled at every stage; see \eqref{bs}). Then, we set $\gamma = 1.4 < \sqrt{2} = \sqrt{\delta}$ (i.e., LR is multiplied by $\gamma = 1.4$). Proposition \ref{prop:1}(iii) thus implies that mini-batch SGD with exponentially increasing BS and exponentially increasing LR achieves faster convergence $O(\frac{1}{\sqrt{\gamma^T}})$ than the $O(\frac{1}{\sqrt{T}})$ rate for the linearly increasing BS and constant LR scheduler in Proposition \ref{prop:1}(ii). The constant BS and LR scheduler in Proposition \ref{prop:1}(i) serves as a useful baseline for analyzing the $\epsilon$-approximation of mini-batch SGD discussed in the next subsection. 

\subsection{Minimization of SFO Complexity and Critical BS}
The case in which a DNN is trained using mini-batch SGD under an $\epsilon$-approximation is defined as 
\begin{align}\label{e_approx}
    \min_{t \in [0:T-1]} \mathbb{E}\left[\|\nabla f(\bm{\theta}_t)\| \right] \leq \epsilon,
\end{align}
where $\epsilon > 0$ denotes the target precision. First-order optimizers, such as SGD and its variants, rely on stochastic gradients estimated from mini-batches of training data. A fundamental metric in this context is \emph{SFO complexity}, defined as the total number of gradient computations required to achieve an $\epsilon$-approximation \eqref{e_approx}. When mini-batch SGD uses a constant BS $b$, the DNN model requires $b$ gradient evaluations per step. When $T$ is the number of steps required to achieve an $\epsilon$-approximation \eqref{e_approx}, 
\begin{align*}
    \boxed{\text{the SFO complexity } N \text{ is } b T.}
\end{align*}
We now consider the relationship between $N$, $T$, and $b$ for an $\epsilon$-approximation \eqref{e_approx} of mini-batch SGD. Proposition \ref{prop:1}(i) implies that mini-batch SGD with a constant BS $b$ and a constant LR $\eta$ satisfies
\begin{align}\label{upper_bound}
    \min_{t \in [0:T-1]} \mathbb{E}\left[\|\nabla f(\bm{\theta}_t)\| \right] 
    \leq 
    \underbrace{\sqrt{\frac{C_1}{T} + \frac{C_2}{b}}}_{ \leq \epsilon \text{ }  \Rightarrow \text{ } \eqref{e_approx}},
\end{align}
where $C_1$ and $C_2$ are positive constants defined as in Proposition \ref{prop:1}(i). If the upper bound in \eqref{upper_bound} is less than or equal to $\epsilon$, i.e., 
\begin{align}\label{T_e_approx}
    b > \frac{C_2}{\epsilon^2} \text{ and }
    T \geq \frac{C_1 b}{\epsilon^2 b - C_2} \eqqcolon T (b), 
\end{align}
then mini-batch SGD is an $\epsilon$-approximation \eqref{e_approx}. That is, if the number of steps achieves $T(b)$ defined by \eqref{T_e_approx}, which is a function of BS $b$, then mini-batch SGD is an $\epsilon$-approximation \eqref{e_approx}. Then, the SFO complexity needed to satisfy \eqref{e_approx} is 
\begin{align}\label{SFO_e_approx}
    \boxed{N(b) = b T(b) = \frac{C_1 b^2}{\epsilon^2 b - C_2}.}
\end{align}
This leads to the following proposition characterizing SFO complexity. 

\begin{proposition}\label{prop:2}
Let $\epsilon > 0$, and let $(\bm{\theta}_t)_{t=0}^{T}$ be the sequence generated by mini-batch SGD (Algorithm \ref{algo:1}) with a constant BS $b$ $(> \frac{C_2}{\epsilon^2})$ and a constant LR $\eta$ $(\in (0, \frac{2}{L}))$ under Assumption \ref{assum:1}. Then, $N(b)$ defined by \eqref{SFO_e_approx} is a convex function of BS $b$, and there exists a minimizer of $N(b)$ given by 
    \begin{align}\label{cbs}
    \text{Critical BS: } \boxed{b_{\epsilon}^\star 
    = \frac{2 C_2}{\epsilon^2}
    = O \left(\frac{1}{\epsilon^2} \right).}
    \end{align}
\end{proposition}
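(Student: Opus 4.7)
The plan is to reduce the claim to a standard calculus (or AM--GM) fact about the function $u \mapsto u + C_2^2/u$ on the positive real line. Concretely, I would introduce the change of variable $u \coloneqq \epsilon^2 b - C_2$, which is an affine, strictly increasing map sending the admissible domain $b > C_2/\epsilon^2$ bijectively onto $u > 0$. Writing $b = (u + C_2)/\epsilon^2$ and substituting into \eqref{SFO_e_approx} yields
\begin{equation*}
N(b) \;=\; \frac{C_1 (u + C_2)^2}{\epsilon^4\, u}
\;=\; \frac{C_1}{\epsilon^4}\left( u + 2 C_2 + \frac{C_2^2}{u}\right).
\end{equation*}

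From this identity the two claims follow cleanly. For \textbf{convexity}, I would note that $u \mapsto u$ and $u \mapsto C_2^2/u$ are convex on $(0,\infty)$ (the latter has positive second derivative $2 C_2^2/u^3$), so their sum is convex in $u$; since $u$ is an affine function of $b$, the composition $N(b)$ is convex on $b > C_2/\epsilon^2$. For the \textbf{minimizer}, I would either apply the AM--GM inequality $u + C_2^2/u \geq 2 C_2$ with equality iff $u = C_2$, or equivalently solve $\frac{d}{du}(u + C_2^2/u) = 1 - C_2^2/u^2 = 0$ on $u > 0$ to obtain $u = C_2$. Unwinding the substitution $u = \epsilon^2 b - C_2 = C_2$ gives $b_\epsilon^\star = 2 C_2/\epsilon^2$, and the scaling $b_\epsilon^\star = O(1/\epsilon^2)$ is immediate since $C_2$ depends only on $L$, $\eta$, and $\sigma^2$.

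As a sanity check I would also verify the result directly in the $b$-variable: differentiating $N(b)$ gives
\begin{equation*}
N'(b) = \frac{C_1 b\,(\epsilon^2 b - 2 C_2)}{(\epsilon^2 b - C_2)^2},
\end{equation*}
which vanishes on the admissible domain precisely at $b = 2 C_2/\epsilon^2$, with $N'(b) < 0$ for smaller $b$ and $N'(b) > 0$ for larger $b$, confirming that this critical point is the unique global minimizer.

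There is no real obstacle in this proof; the only subtlety is keeping track of the admissible domain $b > C_2/\epsilon^2$ imposed by \eqref{T_e_approx} so that $N(b)$ is well-defined (the denominator is positive) and so that the substitution $u = \epsilon^2 b - C_2$ maps into $(0,\infty)$ where the reduction to $u + C_2^2/u$ is valid. Once that is in place, everything reduces to the textbook minimization of $u + C_2^2/u$.
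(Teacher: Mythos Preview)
Your proposal is correct. The paper's own proof is precisely your ``sanity check'': it computes
\[
N'(b) = \frac{C_1 b(\epsilon^2 b - 2 C_2)}{(\epsilon^2 b - C_2)^2},\qquad
N''(b) = \frac{2 C_1 C_2^2}{(\epsilon^2 b - C_2)^3},
\]
observes $N''(b) \geq 0$ for convexity, and solves $N'(b)=0$ to get $b_\epsilon^\star = 2C_2/\epsilon^2$. Your primary route---the affine substitution $u = \epsilon^2 b - C_2$ reducing $N$ to a positive multiple of $u + 2C_2 + C_2^2/u$ and then invoking AM--GM---is a slightly more structural variant: it makes the convexity and the location of the minimizer transparent without computing a second derivative, and it explains \emph{why} the answer is $u = C_2$ (equality in AM--GM) rather than just verifying it. Both arguments are equally short; yours has the minor advantage of isolating the one nontrivial calculus fact ($u + C_2^2/u$ is convex with minimum at $u=C_2$) from the bookkeeping.
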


\begin{proof}
Under the assumptions in Proposition \ref{prop:2}, Proposition \ref{prop:1}(i) holds. Hence, $N(b)$ in \eqref{SFO_e_approx} is well-defined. We then have 
    \begin{align*}
        N'(b) = \frac{C_1 b (\epsilon^2 b - 2 C_2)}{(\epsilon^2 b - C_2)^2} 
        \text{ and }
        N''(b) = \frac{2 C_1 C_2^2}{(\epsilon^2 b - C_2)^3}.
    \end{align*}
Since $N''(b) \geq 0$, $N(b)$ is convex. Moreover, a minimizer \ $N(b)$ exists such that $N'(b_\epsilon^\star) = 0$; i.e., $\epsilon^2 b_\epsilon^\star - 2 C_2 = 0$, which implies that $b_\epsilon^\star$ is given as in \eqref{cbs}.
\end{proof}

We call the BS $b_{\epsilon}^\star$ that minimizes SFO complexity $N(b)$ a \emph{critical BS} (CBS). We can expect that mini-batch SGD using CBS has fast convergence since CBS minimizes the stochastic computational cost so that mini-batch SGD can be an $\epsilon$-approximation.

\section{Adaptive BS and LR Strategy}
We present an adaptive scheduling strategy for BS and LR that leverages theoretical findings on CBS. As shown in~\eqref{cbs}, the CBS $b_{\epsilon}^{\star}$ required to satisfy 
\eqref{e_approx} 
scales as $O(1/\epsilon^2)$. Reflecting this scaling behavior, $\epsilon$ is gradually decreased in multiple stages, and BS and LR are adjusted accordingly to match the corresponding critical values.

Formally, the number of stages $M$ (see \eqref{bs}) is fixed, and a sequence of decreasing target precisions is defined:
\begin{align}\label{e_m}
    \epsilon_0 > \cdots > \epsilon_m > \cdots >  \epsilon_{M-1}.
\end{align}
The target precision in stage $m$ is associated with the corresponding critical BS $b_m$ and LR $\eta_m$. Training begins with initial values $(\epsilon_0, b_0, \eta_0)$, where $b_0 = b_{\epsilon_0}^\star$ denotes the CBS that minimizes the SFO complexity needed to achieve an $\epsilon_0$-approximation using mini-batch SGD with a constant LR $\eta_0$. In practice, $b_0 = b_{\epsilon_0}^\star$ must be computed using SGD with a constant LR $\eta_0$; for example, Figure \ref{fig:cbs} shows that $\eta_0 = 0.1$ yields $b_{0.5}^\star = 2^4$ when training ResNet-18 on CIFAR-10. The full gradient norm is monitored throughout training. When it falls below $\epsilon_m$, the procedure transitions to the next stage $m+1$, and the training parameters are updated accordingly. The following describes how the target precision in \eqref{e_m} is set in accordance with Propositions \ref{prop:1} and \ref{prop:2}.

\subsection{Linearly Increasing BS and Constant LR Scheduler}
Proposition \ref{prop:1}(ii) establishes that the upper bound of $\min_{t \in [0:T-1]} \mathbb{E}[\|\nabla f(\bm{\theta}_t)\|]$ decays at a rate of $O(1/\sqrt{T})$ when BS is linearly increased and LR is kept constant. This means that, as training progresses and the full gradient norm decreases, the BS should be increased accordingly.

These observations support a scheduling strategy in which BS is increased in response to the decay of the full gradient norm. Specifically, we evaluated a scheduler with linearly increasing BS $b_m$ defined by \eqref{bs} and a constant LR $\eta_m = \eta$ defined by \eqref{lr} for stage $m$. The full gradient norm threshold $\epsilon_m$ is adjusted in accordance with the empirical decay pattern. Let $\epsilon_0 > 0$ be the initial target precision. The definition of a linearly increasing BS \eqref{bs} implies that BS $b_1$ for stage $1$ satisfies $2 \min\{ b_0, \Delta b \} \leq b_1 \leq 2 \max\{ b_0, \Delta b \}$. Meanwhile, from the definition of CBS \eqref{cbs}, CBS $b_{\epsilon_1}^\star$ for $\epsilon_1$-approximation is $b_{\epsilon_1}^\star = O (1/\epsilon_1^2)$, which implies $\epsilon_1 = O ( 1/\sqrt{b_{\epsilon_1}^\star})$. Assuming $\epsilon_1 < \epsilon_0$ in \eqref{e_m} yields $\epsilon_1 = \epsilon_0/\sqrt{2}$. By induction, $b_m = b_0 + m \Delta b = O(m+1)$ as defined by the linearly increasing BS in \eqref{bs}, and 
\begin{align*}
    \epsilon_m = \frac{\epsilon_0}{\sqrt{1 + m}}.
\end{align*}
Accordingly, we present a candidate scheduler, with parameters $b_0$, $\Delta b$, and $\eta$ as specified in \eqref{bs} and \eqref{lr}: 
\begin{align}\label{eq:linear-scheduler}
    &\textbf{[Linearly Increasing BS and Constant LR Scheduler]}\nonumber\\
    &\quad \boxed{b_m = b_0 + m \Delta b, \text{ }
    \eta_m = \eta, \text{ } 
    \epsilon_m = \frac{\epsilon_0}{\sqrt{1 + m}}.
    }
\end{align}
This scheduler aligns the increase in the BS with the theoretically required increase in \(b_{\epsilon}^{\star} \) as the full gradient norm $\| \nabla f (\bm{\theta}_t)\|$ decreases and reflects the empirically observed dynamics of SGD. It provides a principled mechanism for improving optimization efficiency without requiring manual tuning of the BS over time.

\subsection{Exponentially Increasing BS and LR Scheduler}
Proposition \ref{prop:1}(iii) implies that the upper bound of $\min_{t \in [0:T-1]} \mathbb{E}[\|\nabla f(\bm{\theta}_t)\|]$ decays at a rate of $O(1/\sqrt{\gamma^T})$ when the BS and LR are increased exponentially. A discussion analogous to that used to derive \eqref{eq:linear-scheduler}, together with the definitions of an exponentially increasing BS \eqref{bs} and CBS in \eqref{cbs} ($\epsilon = O (1/\sqrt{b_\epsilon^\star})$), leads to
\begin{align*}
    b_m = b_0 \delta^m = O (\delta^m) \text{ and }
    \epsilon_m = \frac{\epsilon_0}{\sqrt{\delta^{m}}}.
\end{align*}
We thus present a second candidate scheduler, with parameters $b_0$, $\delta$, $\eta_0$, and $\gamma$ as specified in \eqref{bs} and \eqref{lr}:
\begin{align}\label{eq:exp-scheduler}
    &\textbf{[Exponentially Increasing BS and LR Scheduler]}\nonumber\\
    &\quad \boxed{b_m = b_0  \delta^m, \text{ } \eta_m = \eta_0  \gamma^m, \text{ } \epsilon_m = \frac{\epsilon_0}{\sqrt{\delta^{m}}}.
    }
\end{align}
This exponentially increasing BS and LR 
scheduler adheres to the theoretical scaling law \(b_{\epsilon}^\star = O(1/\epsilon^2) \). The joint scheduling strategy couples the increases in BS and LR with the synchronized decay of the full gradient norm threshold. This preserves theoretical consistency and accelerates convergence compared with static or independently scheduled approaches.

We performed convergence analysis of mini-batch SGD with each of the two candidate schedulers:

\begin{proposition}\label{prop:3} 
Suppose the assumptions in Proposition \ref{prop:1} hold and that mini-batch SGD (Algorithm \ref{algo:1}) equipped with either candidate scheduler (\eqref{eq:linear-scheduler} or \eqref{eq:exp-scheduler}) achieves an $\epsilon_m$-approximation within $T_m$ steps. Then, for all $M$, 
    \begin{align*}
        \min_{t \in [0:T_{M-1} -1]} \mathbb{E} \left[ \|\nabla f (\bm{\theta}_t)\| \right]
        = 
        \begin{dcases}
            O \left(\frac{1}{\sqrt{M}} \right) \text{ } (\text{Scheduler }\eqref{eq:linear-scheduler})\\
            O \left(\frac{1}{\sqrt{\delta^M}} \right) \text{ } (\text{Scheduler } \eqref{eq:exp-scheduler}).
        \end{dcases}
    \end{align*}
\end{proposition}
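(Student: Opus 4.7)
The plan is to deduce the conclusion as a direct consequence of the hypothesis by substituting the closed-form expressions for the stage-wise target precisions $\epsilon_m$ that were derived when the two schedulers were defined. Since the proposition explicitly assumes that mini-batch SGD equipped with each scheduler achieves an $\epsilon_m$-approximation within $T_m$ steps, the proof reduces to bookkeeping: translate this hypothesis at the terminal stage and simplify.

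The first step I would take is to apply the hypothesis at the terminal stage $m = M-1$. By the definition of $\epsilon$-approximation in \eqref{e_approx}, the assumed $\epsilon_{M-1}$-approximation within $T_{M-1}$ steps of the final stage yields
\begin{align*}
\min_{t \in [0:T_{M-1}-1]} \mathbb{E}\left[\|\nabla f(\bm{\theta}_t)\|\right] \leq \epsilon_{M-1}.
\end{align*}
The second step is to substitute the scheduler-specific formula for $\epsilon_{M-1}$. For the linear scheduler \eqref{eq:linear-scheduler}, $\epsilon_m = \epsilon_0/\sqrt{1+m}$, so $\epsilon_{M-1} = \epsilon_0/\sqrt{M} = O(1/\sqrt{M})$. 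For the exponential scheduler \eqref{eq:exp-scheduler}, $\epsilon_m = \epsilon_0/\sqrt{\delta^m}$, so $\epsilon_{M-1} = \epsilon_0/\sqrt{\delta^{M-1}} = \epsilon_0\sqrt{\delta}/\sqrt{\delta^M} = O(1/\sqrt{\delta^M})$. Combining with the bound from the first step delivers the two claimed rates.

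The main subtlety, rather than a genuine obstacle, lies in verifying that the hypothesis is non-vacuous: the chosen $b_m$ and $\eta_m$ sequences must actually drive $B_{T_m} + V_{T_m}$ below $\epsilon_m^2$ in each stage. For scheduler \eqref{eq:linear-scheduler}, this is guaranteed by Proposition \ref{prop:1}(ii), which yields the $O(1/\sqrt{T})$ rate under a linearly increasing BS and a constant LR $\eta < 2/L$. For scheduler \eqref{eq:exp-scheduler}, the additional constraint $\gamma^2 < \delta$ is what ensures the $O(1/\sqrt{\gamma^T})$ rate in Proposition \ref{prop:1}(iii), and in turn the existence of a finite $T_m$ at each stage. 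Since these consistency conditions are built into the scheduler constructions and into the derivations of $\epsilon_m$ preceding \eqref{eq:linear-scheduler} and \eqref{eq:exp-scheduler}, they need only be invoked, not re-proved; the overall argument thus amounts to a two-line substitution once these ingredients are cited.
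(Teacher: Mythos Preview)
Your proposal is correct and matches the paper's own proof essentially line for line: the paper simply notes that the hypothesis gives $\min_{t \in [0:T_{M-1}-1]} \mathbb{E}[\|\nabla f(\bm{\theta}_t)\|] \leq \epsilon_{M-1}$ and then invokes the $\epsilon_m$ formulas from \eqref{eq:linear-scheduler} and \eqref{eq:exp-scheduler}. Your additional remarks on the non-vacuity of the hypothesis go slightly beyond what the paper writes but are consistent with the surrounding discussion.
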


\begin{proof}
Given that $\min_{t \in [0:T_{M-1} -1]} \mathbb{E} [ \|\nabla f (\bm{\theta}_t)\| ] \leq \epsilon_{M-1}$, \eqref{eq:linear-scheduler} and \eqref{eq:exp-scheduler} imply the result stated in Proposition \ref{prop:3}.
\end{proof}

To implement the two candidate schedulers in practice, we designed an adaptive algorithm that tracks the current stage $m$ and transitions to the next stage when the gradient norm drops below $\epsilon_m$. The complete procedures for \eqref{eq:linear-scheduler} and \eqref{eq:exp-scheduler} are summarized in Algorithm~\ref{algo:2}.

\begin{algorithm}[htbp]
\caption{Mini-batch SGD with adaptive schedulers}
\label{algo:2}
\begin{algorithmic}[1]
\REQUIRE
$\bm{\theta}_0 \in \mathbb{R}^d$ (initial point), 
$b_0 > 0$ (initial BS), 
$\eta_0 > 0$ (initial LR), 
$\epsilon_0 > 0$ (initial full gradient norm threshold),
$T \geq 1$ (max steps), 
$M \geq 1$ (total number of stages),
$\Delta b > 0$ (BS increase factor), 
$\gamma > 1$ (BS increase factor), 
$\delta > 1$ (LR increase factor),
\ENSURE 
$(\bm{\theta}_t) \subset \mathbb{R}^d$
\STATE $m \leftarrow 0$
\FOR{$t = 0,1,\ldots,T-1$}
    \STATE{ 
$\nabla f_{B_t}(\bm{\theta}_t)
:=
\frac{1}{b_m} \sum_{i=1}^{b_m} \nabla f_{\xi_{t,i}}(\bm{\theta}_t)$}
\STATE{
$\bm{\theta}_{t+1} 
:= \bm{\theta}_t - \eta_m \nabla f_{B_t}(\bm{\theta}_t)$}
\IF{$\|\nabla f(\bm{\theta}_t)\| \leq \epsilon_m$ \AND $m < M-1$}
        \STATE $m \leftarrow m + 1$

        \STATE $b_m = b_0 + m \Delta b$, $\eta_m = \eta_0$, $\epsilon_m = \frac{\epsilon_0}{\sqrt{1+m}}$ $\triangleleft$ \eqref{eq:linear-scheduler}
        \STATE $b_m = b_0 \delta^m, 
        \eta_m = \eta_0 \gamma^m, \epsilon_m = \frac{\epsilon_0}{\sqrt{\delta^{m}}}$ $\triangleleft$ \eqref{eq:exp-scheduler}
    \ENDIF
\ENDFOR
\end{algorithmic}
\end{algorithm}

\begin{figure}[htbp]
    \centering
        \includegraphics[width=0.9\linewidth]{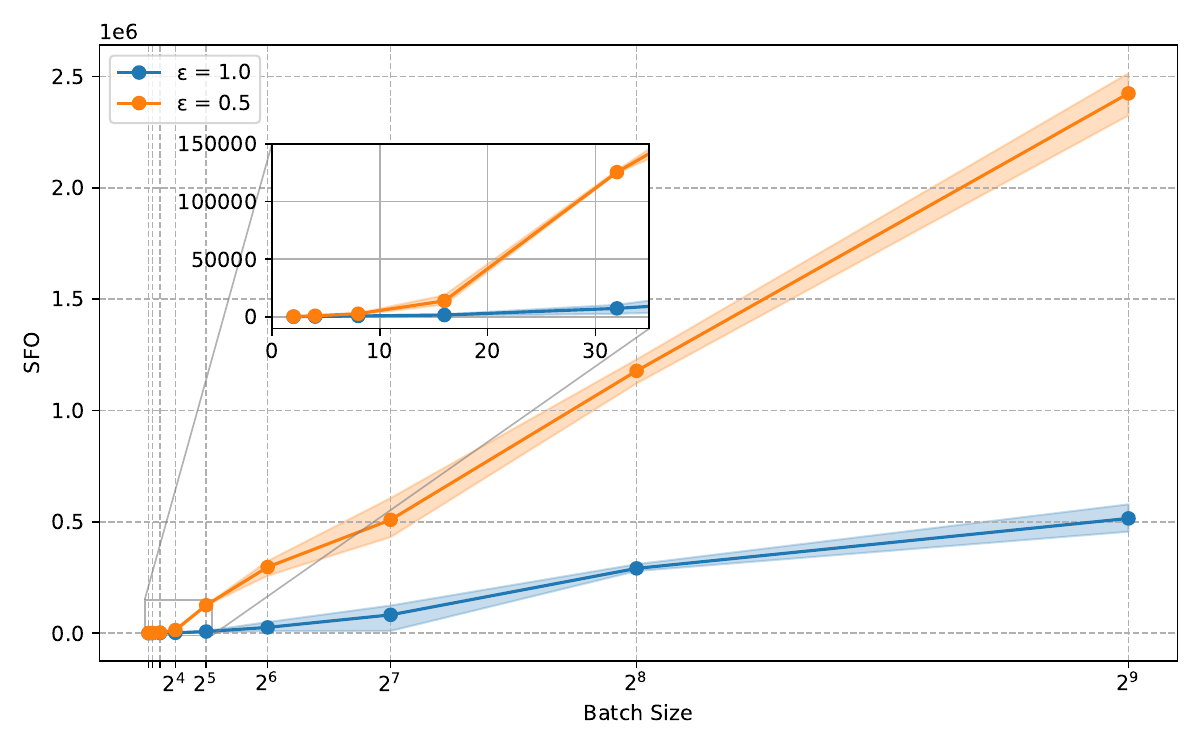}
\caption*{(a) ResNet-18 trained on CIFAR-10 with target precisions $\epsilon = 0.5$ and $\epsilon = 1$}
        \includegraphics[width=0.9\linewidth]{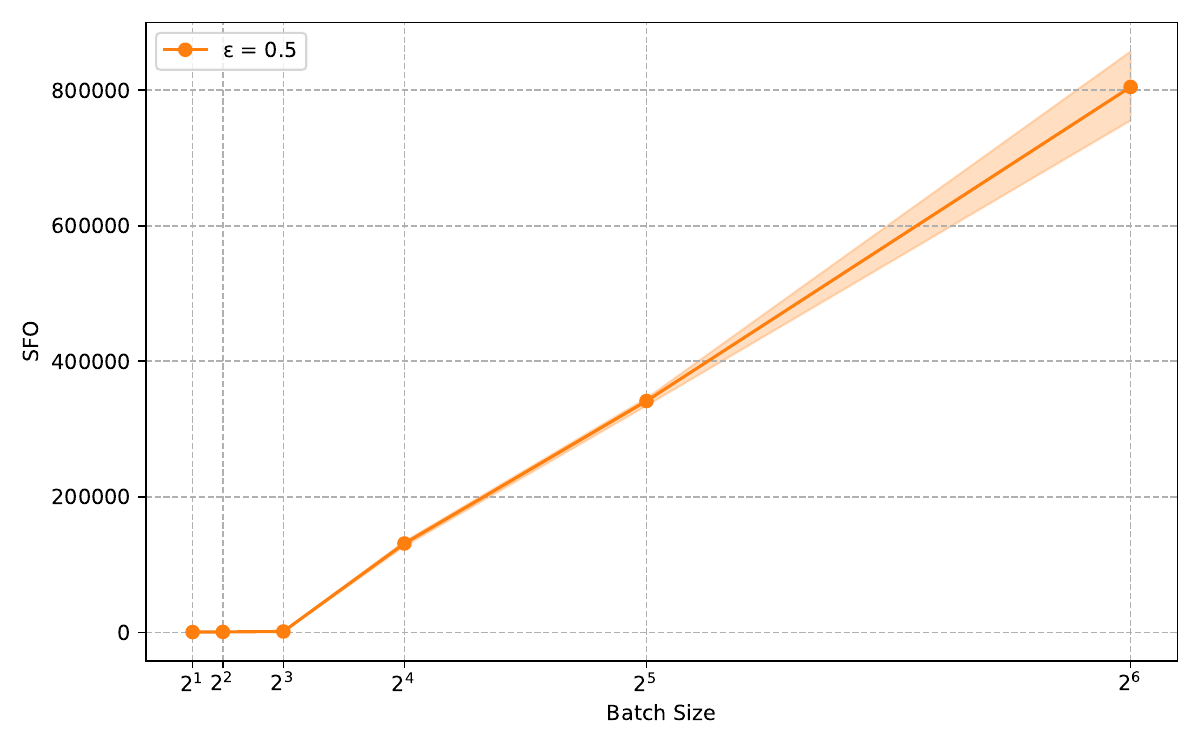}
\caption*{(b) DenseNet trained on CIFAR-100 with target precision $\epsilon = 0.5$ (CBS is not observed when $\epsilon = 1$)}
\caption{SFO complexity needed for SGD to achieve $\|\nabla f (\bm{\theta}_t)\| \leq \epsilon$ versus batch size.}
    \label{fig:cbs}
\end{figure}
\section{Evaluation}
To evaluate the performance of the two candidate schedulers, we performed experiments in which ResNet-18 was trained on CIFAR-10 and DenseNet was trained on CIFAR-100 using Algorithms~\ref{algo:1} and~\ref{algo:2}. All experiments were conducted on a system equipped with a NVIDIA A100 40-GB GPU and an AMD EPYC 7742 2.25-GHz CPU. The software stack comprised Python 3.10.12, PyTorch 2.1.0, and CUDA 12.2.
The solid lines in the figures represent the mean values, and the shaded areas in the figures indicate the maximum and minimum over three runs.

\subsection{Empirical Observation of CBS}
Figure~\ref{fig:cbs} illustrates the relationship between the BS and SFO complexity required to reach $\|\nabla f (\bm{\theta}_t)\| \leq \epsilon$ $(\epsilon = 0.5, 1)$ for ResNet-18 trained on CIFAR-10 and DenseNet trained on CIFAR-100. 
In both cases, the SFO curves exhibit a nearly convex shape and become approximately linear in the large-batch regime, consistent with the theoretical result in Proposition \ref{prop:2}. Notably, SFO complexity begins to increase almost linearly starting around a BS of $2^4 = 16$, suggesting that this value serves as the CBS in both settings. This supports the existence of a threshold beyond which increasing the BS yields diminishing returns in SFO efficiency.

\subsection{Comparison of Candidate Schedulers}
The performances of the two candidate schedulers (\eqref{eq:linear-scheduler} and \eqref{eq:exp-scheduler}) with $\epsilon_0 = 1$ (Figure~\ref{fig:cbs}) are compared in Figure~\ref{fig:compare-steps-linear}. The adaptive joint scheduler with exponentially increasing BS and LR achieved faster reduction in the full gradient norm across all stages. This behavior is consistent with the theoretical prediction in Proposition \ref{prop:3}, which states that adapting both BS and LR to the critical BS improves the convergence rate. 

\begin{figure}[htbp]
\centering
\includegraphics[page=1,width=\linewidth]{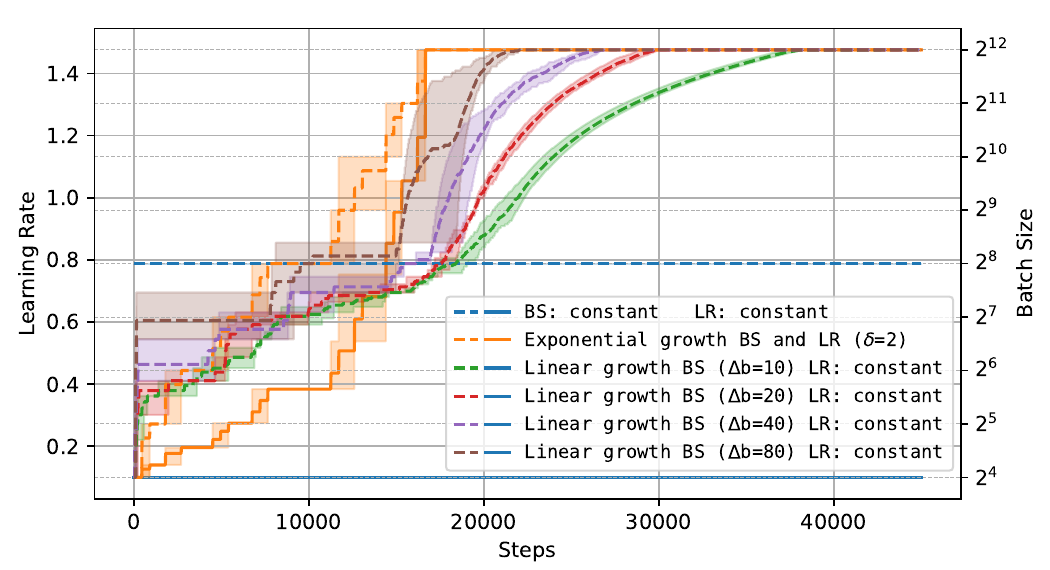}
\caption*{(a) Learning Rate}
\includegraphics[page=2,width=\linewidth]{figures/cifar10_compare_steps_proposed.pdf}
\caption*{(b) Full Gradient Norm of Empirical Loss for Training}
\includegraphics[page=3,width=\linewidth]{figures/cifar10_compare_steps_proposed.pdf}
\caption*{(c) Empirical Loss Value for Training}
\includegraphics[page=4,width=\linewidth]{figures/cifar10_compare_steps_proposed.pdf}
\caption*{(d) Accuracy Score for Testing}
\caption{Comparison of candidate schedulers in training ResNet-18 on CIFAR-10 dataset over 45k steps.}
\label{fig:compare-steps-linear}
\end{figure}

\subsection{Comparison with Existing Schedulers}

\begin{figure}[htbp]
  \centering
    \includegraphics[page=1,width=\linewidth]{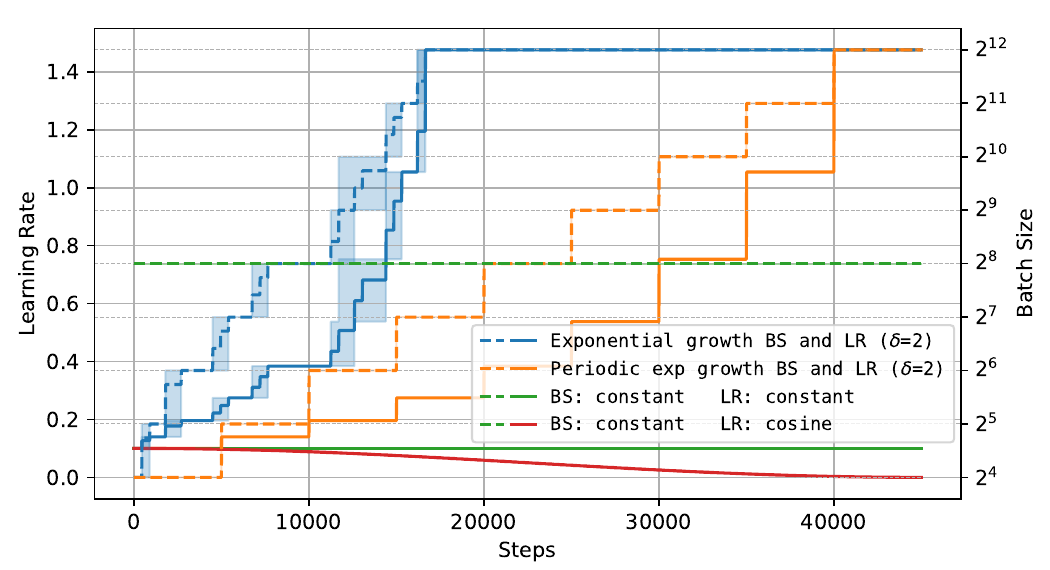}
 \caption*{(a) Learning Rate}
    \includegraphics[page=2,width=\linewidth]{figures/cifar10_compare_steps.pdf}
 \caption*{(b) Full Gradient Norm of Empirical Loss for Training}
    \includegraphics[page=3,width=\linewidth]{figures/cifar10_compare_steps.pdf}
 \caption*{(c) Empirical Loss Value for Training}
    \includegraphics[page=4,width=\linewidth]{figures/cifar10_compare_steps.pdf}
 \caption*{(d) Accuracy Score for Testing}
 \caption{Comparison of proposed adaptive joint scheduler with existing schedulers in training ResNet-18 on CIFAR-10 dataset over 45k steps.}
  \label{fig:compare_steps_cifar-10}
\end{figure}

\begin{figure}[htbp]
  \centering
    \includegraphics[page=1,width=\linewidth]{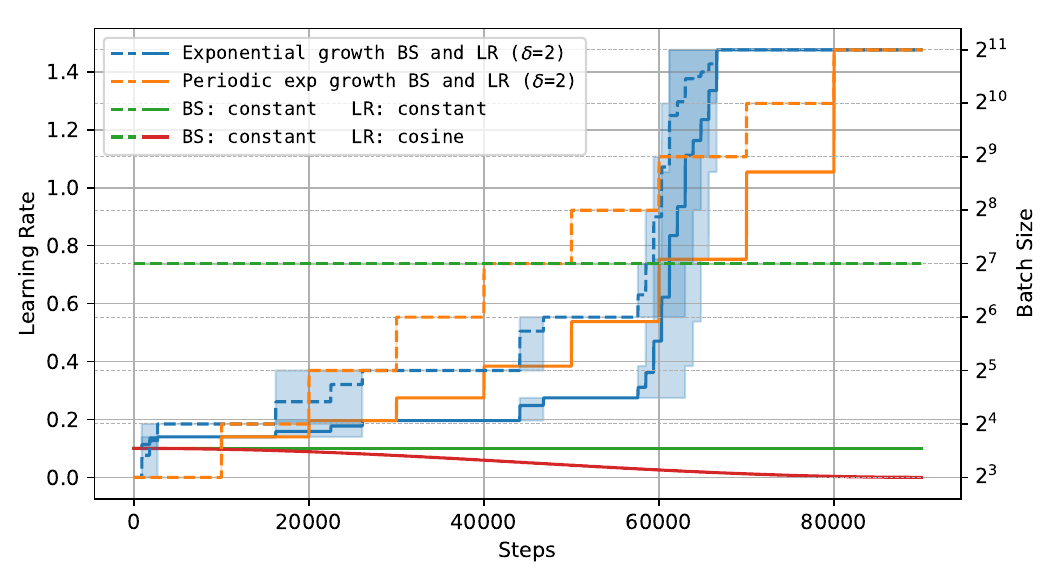}
 \caption*{(a) Learning Rate}
    \includegraphics[page=2,width=\linewidth]{figures/cifar100_compare_steps.pdf}
 \caption*{(b) Full Gradient Norm of Empirical Loss for Training}
    \includegraphics[page=3,width=\linewidth]{figures/cifar100_compare_steps.pdf}
 \caption*{(c) Empirical Loss Value for Training}
    \includegraphics[page=4,width=\linewidth]{figures/cifar100_compare_steps.pdf}
 \caption*{(d) Accuracy Score for Testing}
 \caption{Comparison of proposed adaptive joint scheduler with existing schedulers in training DenseNet on CIFAR-100 dataset over 90k steps.}
  \label{fig:compare_steps_cifar-100}
\end{figure}

The performance of the proposed adaptive joint scheduler is compared in Figures \ref{fig:compare_steps_cifar-10} and \ref{fig:compare_steps_cifar-100} against those of three existing schedulers: (i) a fixed BS and LR scheduler, (ii) a cosine annealing LR scheduler with a constant BS, and (iii) a fixed-interval update scheduler for both LR and BS (e.g., every 5,000 steps in Figure \ref{fig:compare_steps_cifar-10} and every 10,000 steps in Figure \ref{fig:compare_steps_cifar-100}). 

Figure \ref{fig:compare_steps_cifar-10} shows that the adaptive joint scheduler—where both BS and LR are increased on the basis of the full gradient norm—achieved the fastest convergence and the best overall performance. The fixed-interval update scheduler ranks second, highlighting the benefit of increasing both BS and LR. Figure \ref{fig:compare_steps_cifar-100} shows that the fixed-interval update scheduler performs comparably to the adaptive joint scheduler. However, unlike the adaptive method, it does not respond to the optimization dynamics. These results underscore the advantage of adapting the hyperparameters in response to the optimization landscape, particularly the gradient norm, rather than relying on predetermined schedules.

\section{Conclusion}
In our proposed adaptive scheduling strategy for mini-batch stochastic gradient descent, the batch size and learning rate are adjusted on the basis of the full gradient norm. Grounded in theoretical insights into the critical batch size and its relationship to the gradient norm threshold, our strategy provides a principled mechanism for dynamic hyperparameter tuning throughout training. Empirical and theoretical results demonstrate that the proposed adaptive joint scheduler 
accelerates convergence compared with existing approaches. These findings highlight the potential of leveraging optimization signals—such as the full gradient norm—for adaptive control of training dynamics. 
Future work includes extending this approach to other optimizers (e.g., Adam) and applying it to broader training scenarios.

\bibliography{aaai2026}


\end{document}